\theoremstyle{plain}
\newtheorem{thm}{Theorem}
\newtheorem{prop}[thm]{Proposition}
\theoremstyle{definition}
\newtheorem{defn}{Definition}
\title{Projective Quadratic Regression for Online Learning}
\author{Wenye Ma\\
Tencent\\ 
wenyema@tencent.com 
}
\begin{document}

\maketitle

\begin{abstract}
This paper considers online convex optimization (OCO) problems - the paramount framework for online learning algorithm design. The loss function of learning task in OCO setting is based on streaming data so that OCO is a powerful tool to model large scale applications such as online recommender systems. Meanwhile, real-world data are usually of extreme high-dimensional due to modern feature engineering techniques so that the quadratic regression is impractical. Factorization Machine as well as its variants are efficient models for capturing feature interactions with low-rank matrix model but they can't fulfill the OCO setting due to their non-convexity. In this paper, We propose a projective quadratic regression (PQR) model. First, it can capture the import second-order feature information. Second, it is a convex model, so the requirements of OCO are fulfilled and the global optimal solution can be achieved. Moreover, existing modern online optimization methods such as Online Gradient Descent (OGD) or Follow-The-Regularized-Leader (FTRL) can be applied directly. In addition, by choosing a proper hyper-parameter, we show that it has the same order of space and time complexity as the linear model and thus can handle high-dimensional data. Experimental results demonstrate the performance of the proposed PQR model in terms of accuracy and efficiency by comparing with the state-of-the-art methods.
\end{abstract}

\begin{figure*}
\includegraphics[width=1\textwidth]{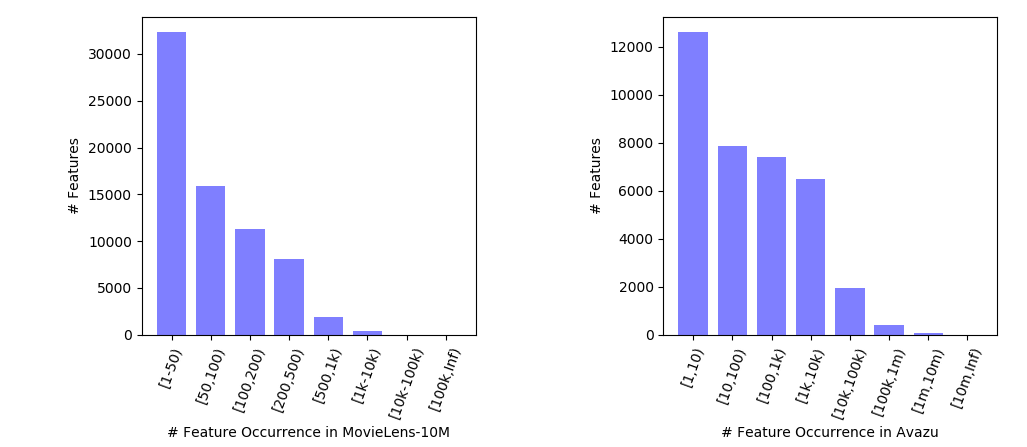}
\caption{Distribution of Feature Occurrence}
\label{fig.feature_counts}
\end{figure*}

\section{Introduction}
In the setting of online learning, the training data arrive in a streaming fashion and the system should provide a response immediately. Specifically, online learning is performed in a sequence of consecutive rounds indexed by time $t$. At round $t$, the learner updates a learning variable using simple calculations that are usually based on a single sample only. Online learning is a powerful tool for a wide range of applications such as online rating, news feeding, and ad-click prediction. The paramount framework for online learning is online convex optimization (OCO) \cite{HazanOCO,ShalevOCO} which can be solved by many state-of-the-art online optimization algorithms such as Online Gradient Decent (OGD) \cite{ZinkevichICML} or Follow-The-Regularized Leader (FTRL) \cite{McMahanKDD}. It is worth to note that the study of OCO and theoretical analysis about above algorithms are based on convex requirement of OCO. The convexity is two folds. First, the loss function is convex. Second, the feasible set of learning variables is also convex.

Meanwhile, in real-world applications, the feature dimension could be extremely high. Therefore, traditional linear models are still the most popular model in such scenario. However, linear models fail to utilize the interaction between features so that lots of artificial feature engineering work is needed. Quadratic regression captures first-order information of each input feature as well as second-order pairwise feature interactions. However, it is usually unacceptable in terms of space complexity when dealing with high-dimensional data because the model size is of order $O(d^2)$ with $d$-dim features. Factorization Machine (FM) \cite{RendleICDM} is an efficient mechanism for capturing up to second-order feature information with a low-rank matrix in factorized form. It only needs $O(md)$ space with rank $m$. FM achieves state-of-the-art performance in various applications \cite{JuanWWW,LiDifacto,LuWSDM,NguyenSIGIR,RendleSIGIR,ZhongCIKM,ChenFMVAE,ChenRaFM} and has recently regained significant attention from researchers \cite{BlondelNIPS,BlondelICML,HeIJCAI,XuICDM}. Unfortunately, the vanilla FM \cite{RendleICDM} and its variants \cite{BlondelNIPS,ChengACM,JuanWWW,LuWSDM} are non-convex formulations and unable to fulfill the fundamental requirements demanded in OCO. Several convexified FM formulations \cite{BlondelECML,LinWWW,YamadaKDD} are proposed to address this weakness but the computation cost is high and thus impractical in high-dimensional problems.

The above formulations are all based on the low-rank assumption of the feature-interaction matrix and thus to reduce the number of learning variables. However, the question is whether we really need the low-rank assumption? Is there any other way to save parameters? Our answer is that we don't need the low-rank assumption and we can save parameters by the intuition of sharing parameters.

This intuition is based on our observation that the frequencies of different features occurring in real-world datasets vary a lot. For instance, figure~\ref{fig.feature_counts} shows the numbers of occurrences of different features in two public datasets, i.e. MovieLens\footnote{\url{https://grouplens.org/datasets/movielens/}} and Avazu\footnote{\url{https://www.kaggle.com/c/avazu-ctr-prediction/data}} respectively. It is shown that a large number of features scarcely occur in the datasets, while only a few features frequently occur. So we separate the features into two categories: high frequent features and low frequent features. We assume that the high frequent features are more important so that their interaction are more valuable. To this background, we propose a Projective Quadratic Regression (PQR) model. Specifically, for high frequent features, we adopt the full quadratic form. Within low frequent features, we just ignore their interactions. We then use a set of sharing parameters for high and low frequent feature interactions. In fact, we can rewrite the global bias, the feature weight vector and the feature interaction matrix into an augmented symmetric matrix. The intuiton of sharing parameters we mentioned above is equivalent to restricting the feasible set to be a subset of symmetric matrices. We show that the feasible set are actually convex and then satisfy the first requirement of OCO. Then we rewrite the loss function into a convex function with respect to the augmented matrix and thus fulfill the second requirement of OCO.  Based on this scheme, the resulting formulation of PQR can seamlessly meet the aforementioned requirements of the OCO framework. Moreover, due to its convexity, most optimization algorithms such as FTRL can be directly applied to PQR model and theoretical analyses such as regret bounds and convergence rates are still valid. In addition, by choosing a proper hyper-parameter, this kind of matrices can be represented by $O(d)$ parameters so that it has the same space complexity with linear models.

We conduct extensive experiments on real-world datasets to evaluate the empirical performance of PQR. As shown in the experimental results in both online rating prediction and online binary classification tasks, PQR outperforms state-of-the-art online learning algorithms in terms of both accuracy and efficiency, especially for high-dimensional problems.

\paragraph{Contribution}

\begin{itemize}
    \item PQR model captures first-order feature information as well as second-order pairwise feature interaction. It is a convex formulation so that it fulfills the requirements of online convex optimization setting and the optimal solution is theoretically guaranteed.
    \item Optimization algorithms for convex models can be applied directly to PQR and all the theoretical analyses are still valid.
    \item PQR is a general framework and can be applied to many other tasks such as batch and stochastic settings besides online learning.
    \item PQR is efficient in terms of space complexity and computation cost. It has the same order of space and time complexity with linear model with a proper hyper-parameter.
    \item We evaluate the performance of the proposed PQR model on both online rating prediction and online binary classification tasks. Experimental results show that PQR outperforms state-of-the-art online learning methods.
\end{itemize}

\section{Method}\label{pqr_sec}
In this section, we first introduce preliminaries in online learning and then turn to the development of the proposed projective quadratic regression (PQR) model for online convex problem. The PQR model is essentially a quadratic regression model with a constraint on the feasible set. The idea behind this model is sharing parameters. We prove the convexity of PQR so that it fits into the online learning setting and state-of-the-art online optimization algorithms can be applied directly. 
\subsection{Online Convex Optimization}\label{oco_subsec}

The online learning algorithms are built upon the assumption that the training instances arrive sequentially rather than being available prior to the learning task. The paramount framework for online learning is Online Convex Optimization (OCO) \cite{HazanOCO,ShalevOCO}. It can be seen as a structured repeated game between a learner and an adversary. At each round $t \in \{1, 2, \ldots, T\}$, the learner is required to generate a decision point $\mathbf{\theta}_t$ from a convex set $\mathcal{S} \subseteq \mathbb{R}^n$. Then the adversary replies to the learner’s decision with a convex loss function $f_t : \mathcal{S} \longrightarrow \mathbb{R}$ and the learner suffers the loss $f_t(\mathbf{\theta}_t)$. Specifically, the online learning setting can be described as follows.

For index $t = 1, 2, \ldots, T$, the learner chooses learning parameters $\mathbf{\theta}_t \in \mathcal{S}$, where $\mathcal{S}$ is convex. Then the environment responds with a convex function $f_t: \mathcal{S} \longrightarrow \mathbb{R}$ and the output of the round is $f_t(\mathbf{\theta}_t)$.

The regret is defined by
\begin{align}
\mbox{regret}_T = \sum^T_{t=1}f_t(\mathbf{\theta}_t) - \min\limits_{\mathbf{\theta}^\ast\in \mathcal{S}}\sum^T_{t=1}f_t(\mathbf{\theta}^\ast).   
\end{align}

The goal of the learner is to generate a sequence of decisions $\{\mathbf{\theta}_t | t = 1, 2, \ldots, T\}$ so that the regret with respect to the best fixed decision in hindsight is sub-linear in $T$, i.e. $\mbox{lim}_{T\rightarrow\infty}\mbox{regret}_T/T = 0$. The sub-linearity implies that when $T$ is large enough, the learner can perform as well as the best fixed decision in hindsight.

Based on the OCO framework, many online learning algorithms have been proposed and successfully applied in various applications \cite{DeMarzoSTOC,LiSurveys,ZhaoKDD}. Follow-The-Regularized-Leader (FTRL) \cite{McMahanKDD} is one of the most popular online learning algorithms, which is summarized in Algorithm \ref{ftrl_alg} and the regret bound of FTRL is shown in Theorem \ref{regret_bound_ftrl} \cite{HazanOCO}.

\begin{algorithm}
\SetAlgoLined
 Input: $\eta>0$, regularization function $R$, and convex set $\mathcal{S}$
 
 Initialize: $\mathbf{\theta}_1 = \arg\min\limits_{\mathbf{\theta}\in \mathcal{S}} R(\mathbf{\theta})$
 
 \For{$t=1$ to $T$}{
  Predict $\mathbf{\theta}_t$
  
  Observe the loss function $f_t$ and let $\mathbf{g}_t = \nabla f_t(\mathbf{\theta}_t)$
  
  Update
  
  $\mathbf{\theta}_{t+1} = \arg\min\limits_{\mathbf{\theta}\in \mathcal{S}} \{\eta\sum^t_{s=1}\mathbf{g}_s \cdot \mathbf{\theta} + R(\mathbf{\theta})\}$
  
 }
 \caption{Follow-The-Regularized-Leader}\label{ftrl_alg}
\end{algorithm}

\begin{thm}\label{regret_bound_ftrl}
The FTRL Algorithm \ref{ftrl_alg} attains the following bound on regret
$$\mbox{regret}_T\leq 2\eta\sum^T_{t=1} ||\mathbf{g}_t||_F^2 + \frac{R(\mathbf{\theta}) - R(\mathbf{\theta}_1)}{\eta}.$$
If an upper bound on the local norms is known, i.e. $||\mathbf{g}_t||_F \leq G_R$ for all time $t$, then we can further optimize over the choice of $\eta$ to obtain
$$\mbox{regret}_T \leq 2 D_RG_R\sqrt{2T}.$$
\end{thm}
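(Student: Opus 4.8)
The plan is to follow the standard Follow-the-Leader / Be-the-Leader (FTL--BTL) analysis, specialised to the matrix-valued iterates of PQR so that $\|\cdot\|_F$ plays the role of the ambient norm (throughout, $R$ is the $1$-strongly-convex regulariser of the algorithm, $\|\mathbf{g}_t\|_F\le G_R$, and $D_R^2$ upper-bounds $R(\mathbf{\theta}^\ast)-R(\mathbf{\theta}_1)$ over competitors $\mathbf{\theta}^\ast\in\mathcal{S}$). First I would introduce the auxiliary regularised cumulative objectives $\Phi_0(\mathbf{\theta})=\tfrac{1}{\eta}R(\mathbf{\theta})$ and $\Phi_t(\mathbf{\theta})=\sum_{s=1}^{t}\mathbf{g}_s\cdot\mathbf{\theta}+\tfrac{1}{\eta}R(\mathbf{\theta})$, so that $\mathbf{\theta}_{t+1}=\arg\min_{\mathbf{\theta}\in\mathcal{S}}\Phi_t(\mathbf{\theta})$ and $\mathbf{\theta}_1=\arg\min_{\mathbf{\theta}\in\mathcal{S}}\Phi_0(\mathbf{\theta})$. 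A one-line induction (the Be-the-Leader lemma) applied to the sequence $h_0=\tfrac1\eta R$, $h_t(\mathbf{\theta})=\mathbf{g}_t\cdot\mathbf{\theta}$ then gives, for every $\mathbf{\theta}^\ast\in\mathcal{S}$,
\[
\sum_{t=1}^{T}\mathbf{g}_t\cdot(\mathbf{\theta}_{t+1}-\mathbf{\theta}^\ast)\;\le\;\frac{R(\mathbf{\theta}^\ast)-R(\mathbf{\theta}_1)}{\eta}.
\]

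Next I would use convexity of each $f_t$ together with $\mathbf{g}_t=\nabla f_t(\mathbf{\theta}_t)$ to write $f_t(\mathbf{\theta}_t)-f_t(\mathbf{\theta}^\ast)\le\mathbf{g}_t\cdot(\mathbf{\theta}_t-\mathbf{\theta}^\ast)$, sum over $t$, and insert $\pm\mathbf{\theta}_{t+1}$ to split the regret as
\[
\mbox{regret}_T\;\le\;\sum_{t=1}^{T}\mathbf{g}_t\cdot(\mathbf{\theta}_t-\mathbf{\theta}_{t+1})\;+\;\sum_{t=1}^{T}\mathbf{g}_t\cdot(\mathbf{\theta}_{t+1}-\mathbf{\theta}^\ast),
\]
the second sum being controlled by the BTL inequality above. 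The remaining term is the stability term: here I would exploit that $\mathbf{\theta}_t$ minimises $\Phi_{t-1}$, that $\mathbf{\theta}_{t+1}$ minimises $\Phi_t$, and that the difference $\Phi_t-\Phi_{t-1}$ is the linear map $\mathbf{\theta}\mapsto\mathbf{g}_t\cdot\mathbf{\theta}$. Adding the two first-order optimality (variational) inequalities evaluated at each other's minimisers and invoking the $1$-strong convexity of $R$ gives $\eta\,\mathbf{g}_t\cdot(\mathbf{\theta}_t-\mathbf{\theta}_{t+1})\ge\|\mathbf{\theta}_t-\mathbf{\theta}_{t+1}\|_F^2$; combined with Cauchy--Schwarz this yields $\|\mathbf{\theta}_t-\mathbf{\theta}_{t+1}\|_F=O(\eta\|\mathbf{g}_t\|_F)$ and hence $\mathbf{g}_t\cdot(\mathbf{\theta}_t-\mathbf{\theta}_{t+1})\le 2\eta\|\mathbf{g}_t\|_F^2$ (tracking the constant gives the $2\eta$ in the statement). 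Substituting the two pieces into the split proves the first bound.

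For the second bound I would plug $\|\mathbf{g}_t\|_F\le G_R$ and $R(\mathbf{\theta}^\ast)-R(\mathbf{\theta}_1)\le D_R^2$ into the first bound to get $\mbox{regret}_T\le 2\eta T G_R^2+D_R^2/\eta$, and then minimise the right-hand side over $\eta>0$: the optimal $\eta=D_R/(G_R\sqrt{2T})$ gives, by AM--GM, $\mbox{regret}_T\le 2D_RG_R\sqrt{2T}$.

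I expect the main obstacle to be the stability step: obtaining a clean $\|\mathbf{\theta}_t-\mathbf{\theta}_{t+1}\|_F\le O(\eta)\|\mathbf{g}_t\|_F$ requires handling the variational inequalities correctly when the minimisers lie on the boundary of the constrained convex set $\mathcal{S}$ (for PQR, the symmetric-matrix set with the prescribed parameter-sharing pattern), and it is precisely there that the strong convexity of $R$ and the choice of $\|\cdot\|_F$ enter. The FTL--BTL telescoping and the final scalar optimisation over $\eta$ are routine by comparison.
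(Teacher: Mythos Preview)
Your argument is correct and is precisely the standard FTL--BTL analysis from Hazan's monograph. Note, however, that the paper does not actually give its own proof of this theorem: it is stated as background and attributed to \cite{HazanOCO}, so there is nothing in the paper to compare against beyond the cited source, whose approach you have reproduced faithfully.

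One small remark on constants: the stability step you outline (strong convexity of $R$ plus the two variational inequalities plus Cauchy--Schwarz) in fact yields $\mathbf{g}_t\cdot(\mathbf{\theta}_t-\mathbf{\theta}_{t+1})\le \eta\|\mathbf{g}_t\|_F^2$, i.e.\ a factor $\eta$ rather than $2\eta$. The $2\eta$ in the quoted bound comes from Hazan's slightly different analysis via local norms; your cleaner global-strong-convexity route gives a sharper constant, and the stated bound follows \emph{a fortiori}. Everything else, including the optimisation over $\eta$ for the $2D_RG_R\sqrt{2T}$ bound, is exactly right.
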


\subsection{Quadratic Regression}\label{qr_subsec}

The general quadratic regression is a combination of linear model with pairwise feature interaction. Given an input feature vector $\mathbf{x}\in\mathbb{R}^d$, the prediction $\hat{y}$ can be obtained with the following formula:
\begin{align}
\hat{y} = b + \mathbf{w}\cdot \mathbf{x} + \sum_{1\leq i \leq j \leq d} A_{i,j}\mathbf{x}_i\mathbf{x}_j
\end{align}
where $b\in\mathbb{R}$ is the bias term, $\mathbf{w}$ is the first-order feature weight vector and $A\in\mathbb{R}^{d\times d}$ is the second-order feature interaction matrix. Clearly we only need the upper triangle of the matrix so that we can assume the matrix is symmetric, i.e., $A=A^T\in\mathbb{R}^{d\times d}$. Then we rewrite the bias term $b$, the first-order feature weight vector $\mathbf{w}$ and the second-order feature interaction matrix $A$ into a single matrix $C$:
\begin{align}
C = \begin{bmatrix}
    A & \mathbf{w} \\
    \mathbf{w}^T & b \\
\end{bmatrix}    
\end{align}
Therefore, the prediction $\hat{y}$ can be represented by a quadratic form with extended feature vector $\hat{\mathbf{x}} = (\mathbf{x}^T, 1)^T$. In fact, we can define the quadratic regression model by
\begin{align} 
\hat{y}(C) &= \frac{1}{2}\hat{\mathbf{x}}^TC\hat{\mathbf{x}}\label{eq_qr_form} \\
 & C\in \mathcal{S} \label{eq_symmetric_constraint}
\end{align}
where $\mathcal{S}$ is the set of all $(d+1)\times (d+1)$ symmetric matrices. Clearly, $\hat{y}$ is linear and thus convex in $C$. Under this setting, we can show that the composition of any convex function and the quadratic regression form is still convex in $C$. Actually we have

\begin{prop}\label{convex_obj}
Let $\mathcal{K}\subseteq\mathbb{R}^{(d+1)\times (d+1)}$ be a convex set, $f: \mathbb{R}\longrightarrow\mathbb{R}$ a convex function, and $\hat{y}$ a quadratic form defined in (\ref{eq_qr_form}). Then $f \circ \hat{y}: \mathcal{K}\longrightarrow\mathbb{R}$ is convex in $C\in\mathcal{K}$.
\end{prop}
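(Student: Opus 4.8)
The plan is to reduce the claim to the elementary fact that the precomposition of a convex function with an affine map is convex. The crucial observation is that, although $\hat{y}(C)=\tfrac12\hat{\mathbf{x}}^T C\hat{\mathbf{x}}$ is \emph{quadratic} in the feature vector $\hat{\mathbf{x}}$, here $\hat{\mathbf{x}}$ is fixed data and the genuine variable is the matrix $C$; as a function of $C$ the map is \emph{linear}. Concretely, writing the Frobenius inner product $\langle M,C\rangle=\sum_{i,j}M_{ij}C_{ij}$, one has $\hat{y}(C)=\tfrac12\langle \hat{\mathbf{x}}\hat{\mathbf{x}}^T,\,C\rangle$, which is manifestly linear, hence affine, in $C$. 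So the first step I would carry out is to record this identity and note that $\hat{y}(\lambda C_1+(1-\lambda)C_2)=\lambda\,\hat{y}(C_1)+(1-\lambda)\,\hat{y}(C_2)$ for all $C_1,C_2$ and $\lambda\in\mathbb{R}$.

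The second step is to verify the convexity inequality directly. Take $C_1,C_2\in\mathcal{K}$ and $\lambda\in[0,1]$. Since $\mathcal{K}$ is convex, $\lambda C_1+(1-\lambda)C_2\in\mathcal{K}$, so $f\circ\hat{y}$ is defined at that point. Using the linearity of $\hat{y}$ in $C$ followed by the convexity of $f$, we get $f\bigl(\hat{y}(\lambda C_1+(1-\lambda)C_2)\bigr)=f\bigl(\lambda\,\hat{y}(C_1)+(1-\lambda)\,\hat{y}(C_2)\bigr)\le \lambda f(\hat{y}(C_1))+(1-\lambda)f(\hat{y}(C_2))$, which is precisely the convexity of $f\circ\hat{y}$ on $\mathcal{K}$.

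I do not expect a genuine obstacle: the argument is a one-line composition fact once the linearity in $C$ is made explicit. The only subtlety worth flagging in the write-up is the potential confusion between the two arguments of the quadratic form — it is essential that the learning variable is the augmented matrix $C$, in which the model is linear, and not the feature vector $\hat{\mathbf{x}}$, in which it is quadratic and generally non-convex. The convexity of the domain $\mathcal{K}$ enters only to ensure that the convex combination $\lambda C_1+(1-\lambda)C_2$ stays in the domain so that the inequality is meaningful; no further structure of $\mathcal{K}$ (such as it consisting of symmetric matrices) is needed for this particular statement.
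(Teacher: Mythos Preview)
Your proposal is correct and follows essentially the same approach as the paper: both rely on the linearity of $\hat{y}$ in $C$, then apply the convexity of $f$ to the convex combination $\lambda C_1+(1-\lambda)C_2\in\mathcal{K}$. Your additional remarks---the Frobenius inner product representation $\hat{y}(C)=\tfrac12\langle \hat{\mathbf{x}}\hat{\mathbf{x}}^T,C\rangle$ and the caution about confusing the variable $C$ with the data $\hat{\mathbf{x}}$---are helpful clarifications but do not alter the argument.
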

\begin{proof}
Consider $\lambda\in[0, 1]$ and $C_1, C_2 \in \mathcal{K}$. By the convexity of $\mathcal{K}$, $C = \lambda C_1 + (1 - \lambda) C_2 \in \mathcal{K}$. Then by the linearity of $\hat{y}$ and convexity of $f$, we have
\begin{align*} 
f(\hat{y}(C)) & = f(\hat{y}(\lambda C_1 + (1-\lambda)C_2)) \\
 &=  f(\lambda \hat{y}(C_1) + (1-\lambda)\hat{y}(C_2)) \\ 
 &\leq  \lambda f(\hat{y}(C_1)) + (1-\lambda)f(\hat{y}(C_2)),
\end{align*}
which shows $f\circ\hat{y}$ is convex.
\end{proof}

\begin{figure*}
    \centering
    \[\left[\begin{array}{llllllllll}
      0         & p_{1,2}   & p_{1,3}   & \ldots & p_{1,k-1} & p_{1,k}   & q_1     & q_1     & \ldots & q_1     \\
      p_{1,2}   & 0         & p_{2,3}   & \ldots & p_{2,k-1} & p_{2,k}   & q_2     & q_2     & \ldots & q_2     \\
      p_{1,3}   & p_{2,3}   & 0         & \ldots & p_{3,k-1} & p_{3,k}   & q_3     & q_3     & \ldots & q_3     \\
      \vdots    & \vdots    & \vdots    & \ldots & \vdots    & \vdots    & \vdots  & \vdots  &        & \vdots  \\
      p_{1,k-1} & p_{2,k-1} & p_{3,k-1} & \ldots & 0         & p_{k-1,k} & q_{k-1} & q_{k-1} & \ldots & q_{k-1} \\
      p_{1,k}   & p_{2,k}   & p_{3,k}   & \ldots & p_{k-1,k} & 0         & q_k     & q_k     & \ldots & q_k     \\
      q_1       & q_2       & q_3       & \ldots & q_{k-1}   & q_k       & 0       & 0       & \ldots & 0       \\
      q_1       & q_2       & q_3       & \ldots & q_{k-1}   & q_k       & 0       & 0       & \ldots & 0       \\
      \vdots    & \vdots    & \vdots    &        & \vdots    & \vdots    & \vdots  & \vdots  &        & \vdots  \\
      q_1       & q_2       & q_3       & \ldots & q_{k-1}   & q_k       & 0       & 0       & \ldots & 0
    \end{array}\right]\]
  \caption{$A^{(\mathcal{H}, \mathcal{L})}$ with $\mathcal{H}=\{1, 2, \ldots, k\}$ and $\mathcal{L}=\{k+1, k+2, \ldots, d\}$}
  \label{matrix_example}
\end{figure*}

\subsection{Projective Quadratic Regression}\label{pqr_subsec}

The major problem of quadratic regression is that the feasible set contains all symmetric matrices which is too large and impractical in real-world applications. Existing formulations are usually based on the low-rank assumption. The vanilla FM is just decomposing the matrix into a product of a low-dimensional matrix and its transpose. But this formulation makes it a non-convex model. Several convexified FM formulations \cite{LinWWW,YamadaKDD} are based on the same assumption and adding low-rank constraint by restricting the nuclear norm of the candidate matrix. However, it introduces operations such as singular value decomposition and the overall computation cost is actually high and thus it is impractical in real applications with large and high-dimensional datasets.

The main idea of our proposed projective quadratic regression model is to restrict the feasible set to a subset of symmetric matrices. It is not based on low-rank assumption but the intuition of sharing parameters. This is based on the assumption that the frequencies of different features occurring vary a lot as shown in Figure~\ref{fig.feature_counts}. We then separate the features into two categories: high frequent features and low frequent features. We assume that the high frequent features are more important so that their interaction are more valuable. Therefore, for high frequent feature interactions, we adopt the full quadratic form. And within low frequent features, we just ignore their interactions. Finally, we use a set of sharing parameters for high and low frequent feature interactions. 

We denote $\mathcal{I} = \{1, 2, \ldots, d\}$ as the index set of all features where $d$ is the dimension of the feature space. The separation of high and low frequent features can be defined as follows.

\begin{defn}\label{separation}
A feature separation is a bi-partition of index set $\mathcal{I}$, i.e.,
\begin{align}
    \mathcal{I} = \mathcal{H} \cup \mathcal{L} \mbox{ with } \mathcal{H}\cap\mathcal{L}=\emptyset,
\end{align}
where $\mathcal{H}$ is the set of indices of all high frequent features while $\mathcal{L}$ for low frequent features.
\end{defn}

In practice, we have various ways to separate the high and low frequent features. For example, we can sample the training data and calculates the features counts. Then we can choose the top-$k$ features as the high frequent features and the rest as low frequent features. Another way may be setting a threshold, any feature whose occurrence is greater than the threshold can be considered as a high frequent feature, and otherwise a low frequent feature. In this paper, we use the top-$k$ method where $k$ is a hyper-parameter and can be called the order of the model.

After separating the indices, i.e., $\mathcal{I} = \mathcal{H}\cup\mathcal{L}$, the key idea of PQR is using the following matrix to catch feature interactions:
\begin{align}\label{pqr_matrix}
\nonumber    A_{i,i} & = 0 \mbox{ for all } i \in\mathcal{I} \\
\nonumber    A_{i,j} & = p_{i,j} \mbox{ if } i<j \mbox{ and } i, j \in \mathcal{H} \\
\nonumber    A_{i,j} & = q_{i} \mbox{ if } i<j, i \in \mathcal{H} \mbox{ and } j \in \mathcal{L} \\ 
             A_{i,j} &= 0 \mbox{ if } i\neq j \mbox{ and } i, j \in \mathcal{L}
\end{align}
where $p_{i,j}$ and $q_i$ are learning parameters.

\begin{defn}\label{pqr_matrix_defn}
Given a separation $\mathcal{I}=\mathcal{H}\cup\mathcal{L}$, the matrix with the form in (\ref{pqr_matrix}) can be called a PQR matrix with respect to this separation. The PQR matrix can be denoted as $A^{(\mathcal{H}, \mathcal{L})}$.
\end{defn}

Let us assume that the size of $\mathcal{H}$ is $k$, i.e., $|\mathcal{H}| = k$. So $|\mathcal{L}| = d - k$. Without losing generality, if we assume $\mathcal{H} = \{1, 2, \ldots, k\}$ and $\mathcal{L} = \{k+1, k+2, \ldots, d\}$, then the matrix looks like in Figure~\ref{matrix_example}. We can easily see that there are $\frac{1}{2}k (k+1)$ learning parameters for the PQR matrix. Combining with the linear part of the model, the space complexity of learning parameters is of the order $O(k^2 + d)$. In particular, if we choose $k$ in the order of $O(\sqrt{d})$, the PQR model have the same order of computation cost as linear models. The computation cost is summarized in Table \ref{computation_complexity}.

\begin{table}
  \caption{Computation complexity}
  \label{computation_complexity}
  \centering
  \begin{tabular}{llll}
    \toprule
         & Linear Models & FM Models & PQR \\
    \midrule
    Time & $O(nd)$ & $O(nmd)$ & $O(nd + nk^2)$ \\
    Space & $O(d)$ & $O(md)$ & $O(d + k^2)$ \\
    \bottomrule
  \end{tabular}
\end{table}

Now we can show that the PQR model fulfills the OCO framework. For a given separation $\mathcal{I} = \mathcal{H}\cup\mathcal{L}$, Let

\begin{align}
\mathcal{T}^{(\mathcal{H}, \mathcal{L})} = \{ C | C = \begin{bmatrix}
    A^{(\mathcal{H}, \mathcal{L})} & \mathbf{w} \\
    \mathbf{w}^T & b \\
\end{bmatrix}\}
\end{align}
where $A^{(\mathcal{H}, \mathcal{L})}$ is the PQR matrix. We then have
\begin{prop}\label{convex_feasible}
$\mathcal{T}^{(\mathcal{H}, \mathcal{L})}$ is a convex set for any separation $\mathcal{I} = \mathcal{H}\cup\mathcal{L}$.
\end{prop}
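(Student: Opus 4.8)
The plan is to show that $\mathcal{T}^{(\mathcal{H},\mathcal{L})}$ is actually a linear subspace of the space $\mathbb{R}^{(d+1)\times(d+1)}$ of all $(d+1)\times(d+1)$ matrices; since every linear subspace is convex, this suffices. First I would rewrite membership in $\mathcal{T}^{(\mathcal{H},\mathcal{L})}$ purely as constraints on the entries of a matrix $C=(C_{i,j})$. Unpacking Definition~\ref{pqr_matrix_defn} together with the block form, $C\in\mathcal{T}^{(\mathcal{H},\mathcal{L})}$ precisely when: (i) $C$ is symmetric, i.e. $C_{i,j}=C_{j,i}$ for all $i,j$; (ii) $C_{i,i}=0$ for every $i\in\mathcal{I}$ (the diagonal of the $A$-block vanishes); (iii) $C_{i,j}=0$ whenever $i\ne j$ and $i,j\in\mathcal{L}$; and (iv) for each fixed $i\in\mathcal{H}$, all entries $C_{i,j}$ with $j\in\mathcal{L}$ are equal to one another (this common value is the shared parameter $q_i$). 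Nothing ties the last row/column together beyond symmetry, so $\mathbf{w}$ and $b$ are unconstrained; similarly the entries $C_{i,j}$ with $i<j$ and $i,j\in\mathcal{H}$ are unconstrained (these are the $p_{i,j}$).

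Next I would observe that every one of (i)--(iv) is a \emph{homogeneous linear} equation in the entries of $C$: (i)--(iii) each have the form ``(an entry) $=0$'' or ``(an entry) $-$ (another entry) $=0$'', and (iv) is the family of equations $C_{i,j}-C_{i,j'}=0$ for $i\in\mathcal{H}$, $j,j'\in\mathcal{L}$. Hence $\mathcal{T}^{(\mathcal{H},\mathcal{L})}$ is the intersection of the kernels of finitely many linear functionals on $\mathbb{R}^{(d+1)\times(d+1)}$, so it is a linear subspace and in particular convex. If one prefers a fully elementary argument, one can instead take $C_1,C_2\in\mathcal{T}^{(\mathcal{H},\mathcal{L})}$ and $\lambda\in[0,1]$ and check directly that $C=\lambda C_1+(1-\lambda)C_2$ still satisfies (i)--(iv): zeros stay zero, symmetry is preserved by linearity, and if $C_1$ realizes the shared value $q_i^{(1)}$ in all positions $(i,j)$, $j\in\mathcal{L}$, and $C_2$ realizes $q_i^{(2)}$ there, then $C$ realizes the single value $\lambda q_i^{(1)}+(1-\lambda)q_i^{(2)}$ in all of them, so the sharing structure survives.

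I do not expect a genuine obstacle: the content of the statement is just that ``sharing parameters'' and ``forcing a prescribed zero-pattern'' are both affine (indeed linear) restrictions, and such restrictions are closed under convex combinations. The one point that merits an explicit sentence is condition (iv): one should spell out that requiring a block of entries to share a common but otherwise free value is a linear condition (the pairwise differences vanish), so that, exactly like the zero-pattern constraints, it is respected by convex combinations. That is the only place where ``the feasible set is convex'' might look dubious at first glance, and it becomes immediate once phrased this way.
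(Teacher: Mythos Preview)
Your proposal is correct and matches the paper's approach: the paper also argues that $\mathcal{T}^{(\mathcal{H},\mathcal{L})}$ is closed under addition and scalar multiplication (i.e., a linear subspace) and hence convex. Your version is simply a more detailed unpacking of the same idea, with the helpful observation that the parameter-sharing condition (iv) is itself a homogeneous linear constraint.
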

\begin{proof}
It is easy to verify that $\mathcal{T}^{(\mathcal{H}, \mathcal{L})}$ is closed under addition and scalar multiplication. By the definition of convex set, $\mathcal{T}^{(\mathcal{H}, \mathcal{L})}$ is convex.
\end{proof}

Now we define our projective quadratic regression model (PQR) as a quadratic regression model with restriction the feature-interaction matrices to be the PRQ matrices. In summary, the proposed PQR model can be represented by
\begin{align} 
\hat{y}(C) &= \frac{1}{2}\hat{\mathbf{x}}^TC\hat{\mathbf{x}}\label{eq_pqr_form} \\
\nonumber & \mbox{s.t. } C\in \mathcal{T}^{(\mathcal{H}, \mathcal{L})} 
\end{align}

By Proposition \ref{convex_obj} and \ref{convex_feasible}, we can see that PQR model (\ref{eq_pqr_form}) fulfills the OCO setting.

\paragraph{Examples}
There are two important examples: online rating prediction and online binary classification.

In the online rating prediction task, at each round, the model receives a pair of user and item sequentially and then predicts the value of the incoming rating correspondingly. Denote the instance arriving at round $t$ as $(u_t, i_t, y_t)$, where $u_t$, $i_t$ and $y_t$ represent the user, item and the rating (a number from 1 to 5) given by user $u_t$ to item $i_t$. The user feature and item feature (still denoted by $u_t$ and $i_t$ to save notation) can be represented by a concatenation of one-hot vectors. The input feature vector $x_t$ is constructed as a concatenation of $u_t$ and $i_t$. Upon the arrival of each instance, the model predicts the rating with $\hat{y}_t = \frac{1}{2} \hat{x}_t C \hat{x}_t$, where $\hat{x}_t = [x^T_t , 1]^T$. The convex loss function incurred at each round is the squared loss:
\begin{align}
f_t(\hat{y}(C_t)) = \frac{1}{2}||\hat{y}(C_t) - y_t||^2_2    
\end{align}

The online binary classification task is usually applied to the click-through-rate (CTR) prediction problem. In this task, the instances are denoted as $(x_t , y_t)$ indexed by $t \in [1, 2, \ldots, T]$, where $x_t$ is the input feature vector and $y_t \in \{-1, 1\}$ is the class label. At round $t$, the model predicts the label with $\mbox{sign}(\hat{y}_t) = \mbox{sign}(\frac{1}{2}\hat{x}^T_t C_t\hat{x}_t)$, where $\hat{x}_t = [x^T_t , 1]^T$. The loss function is a logistic loss function:
\begin{align}
f_t(\hat{y}(C_t)) = \log(1 + \frac{1}{e^{-y_t\cdot\hat{y}(C_t)}})    
\end{align}

\begin{algorithm}
\SetAlgoLined
 Pre-calculate separation $\mathcal{I}=\mathcal{H}\cup\mathcal{L}$.
 
 Parameter: $\alpha>0$, $\beta>0$, $\lambda_1>0$, $\lambda_2>0$
 
 Initialize: $\mathbf{z} = \mathbf{n} = 0$
 
 \For{$t=1$ to $T$}{
  Receive feature vector $\mathbf{x}_t$.
  
  Compute the PQR expansion $\Tilde{\mathbf{x}}_t$ by (\ref{pqr_expansion})
  
  Let $I_t = \{i | \Tilde{\mathbf{x}}_{t,i} \neq 0\}$
  
  \For{all $i \in I_t$}{
     $\mathbf{w}_i = 
     \begin{cases}
        0 & \text{ if } |\mathbf{z}_i| < \lambda_1 \\
        -\frac{(\mathbf{z}_i - \mbox{sgn}(\mathbf{z}_i)\lambda_1)}{(\frac{\beta + \sqrt{\mathbf{n}_i}}{\alpha} + \lambda_2)} & \text{ if } |\mathbf{z}_i| \geq \lambda_1 \\
     \end{cases}$
  }
  compute 
  $p_t = \begin{cases}
    \mathbf{w}\cdot\Tilde{\mathbf{x}}_t & \text{ for rating prediction;} \\
    \mbox{sigmoid}(\mathbf{w}\cdot\Tilde{\mathbf{x}}_t) & \text{ for binary classification}.
  \end{cases}$

  \For{all $i\in I_t$}{
    $\mathbf{g}_i = (p_t - y_t)\mathbf{x}_{t, i}$
    
    $\sigma_i = \frac{1}{\alpha} (\sqrt{n_i + \mathbf{g}_i^2} - \sqrt{\mathbf{n}_i})$
    
    $\mathbf{z}_i \leftarrow \mathbf{z}_i + \mathbf{g}_i - \sigma_i \mathbf{w}_i$
    
    $\mathbf{n}_i \leftarrow \mathbf{n}_i + \mathbf{g}_i^2$
  }
 }
 \caption{FTRL-Proximal for the PQR model}\label{ftrl_pqr_alg}
\end{algorithm}

\paragraph{Implementation Details} We first calculate the separation $\mathcal{I}=\mathcal{H}\cup\mathcal{L}$. We can achieve this by sampling the data, calculating the feature counts and selecting top-$k$ features as $\mathcal{H}$ and the rest as $\mathcal{L}$. Then we find that PQR model is actually a set of rules to cross features. If $\mathbf{x}$ is a feature vector, we can expand it into a new vector, denoted by $\Tilde{\mathbf{x}}$. It can be defined by 
\begin{align}\label{pqr_expansion}
    \nonumber\Tilde{\mathbf{x}}_i &= \mathbf{x}_i \mbox{ if } i \in \mathcal{I} \\
    \nonumber\Tilde{\mathbf{x}}_{d + k i + j} &= \mathbf{x}_i\mathbf{x}_j \mbox{ if } i, j \in \mathcal{H} \\
             \Tilde{\mathbf{x}}_{d + k^2 + i} &= \mathbf{x}_i\mathbf{x}_j \mbox{ if } i \in \mathcal{H}, j \in \mathcal{L}
\end{align}
Actually we have the following definition.
\begin{defn}\label{pqr_expansion_def}
Given a feature vector $\mathbf{x}$ and a separation $\mathcal{I}=\mathcal{H}\cup\mathcal{L}$, the vector $\Tilde{\mathbf{x}}$ defined in (\ref{pqr_expansion}) is called the PQR expansion of $\mathbf{x}$.
\end{defn}
Therefore, the PQR model on a dataset is essentially equivalent to the linear model with respect the associated PQR expansions of this dataset. We list the implementation detail of online rating and online binary classification in Algorithm \ref{ftrl_pqr_alg} by following the FTRL-Proximal algorithm in \cite{McMahanKDD}.

\paragraph{More Explanation on PQR}
Like vanilla FM, the idea behind the PQR model is matrix factorization. In fact, the PQR matrix with respect to $\mathcal{I} = \mathcal{H}\cup\mathcal{L}$, can be considered as a decomposition in the following form
\begin{align}
    A^{(\mathcal{H}, \mathcal{L})} = P^T M P
\end{align}
where $P$ is $(k+1) \times d$ matrix and $M$ is a $(k+1)\times(k+1)$ matrix. $M$ is symmetric and its diagonals are zeros. Namely, $M_{i,j}=M_{j,i}$ and $M_{i,i} = 0$. So it has $\frac{1}{2}k(k+1)$ parameters which represents the feature-interaction learning parameters. The matrix $P$ is a projection matrix which maps the original feature vector into a lower dimensional space. Specifically, $Px$ is a permutation of
$$[x_{i_1}, x_{i_2}, \ldots, x_{i_k}, x_{\mathcal{L}}]$$
where $i_l\in \mathcal{H}$ with $l = 1, 2, \ldots, k$ and $x_{\mathcal{L}} = \sum_{j\in\mathcal{L}}x_j$. The key idea is that the projection matrix $P$ is determined by the separation $\mathcal{I}=\mathcal{H}\cup\mathcal{L}$, so we don't need to learn it. That is also the reason why we name the proposed model as projective quadratic regression model.

\section{Related Work}\label{related_sec}

\paragraph{Vanilla FM}
The vanilla Factorization Machine \cite{RendleICDM} defines the interaction matrix $A = V V^T$ where $V$ is a $d\times m$ matrix with $m \ll d$ and optimized on thin matrix $V$. It considers both the first and the second order information of features and learns the feature interactions automatically. It yields state-of-the-art performance in various learning tasks and its space complexity and computation cost are acceptable. However, it is a non-convex formulation and cannot fit into the online convex optimization setting. Moreover, the global optimal solution is not guaranteed by gradient based optimization algorithms. The bad local minima and saddle points may affect the performance and it is difficult to study how to select a good initial point theoretically.

\paragraph{CFM}
The Convex FM (CFM) introduced by Yamada \textit{et al.} \cite{YamadaKDD} is a convex variant of the widely used Factorization Machines. It employs a linear + quadratic model and regularizes the linear term with the $l_2$-norm and the quadratic term with the nuclear norm. It outperforms the vanilla FM on some applications. However, it is impractical in high-dimensional datasets due to high computation cost.

\paragraph{CCFM}
The Compact Convexified FM (CCFM) model invented by X. Lin \textit{et al.} \cite{LinWWW} is another convex model. By rewriting the bias term $b$, the first-order feature weight vector $\mathbf{w}$ and the second-order feature interaction matrix $A$ into a single matrix $C$ and restricting the feasible set by intersecting a nuclear ball, this formulation fits OCO setting. However, the computation cost is very high since at each iteration, a singular value decomposition operation is needed, which is unbearable in a real application.

\paragraph{DiFacto}
The Difacto introduced by M. Li \textit{et al.} \cite{LiDifacto} is a variant of Factorization Machines. It is not a convex model but it also use the information of frequency of feature occurrence. In DiFacto, the features are allocated embedding vectors with different ranks based on the occurrence frequency. Namely, higher frequency features corresponds to embedding vectors with larger ranks. The pairwise interaction between features with different ranks is obtained by simply truncating the embedding with high rank to a lower rank. Such truncations usually lead to worse performance. In fact, DiFacto reduces the computational burden of FM by sacrificing the performance. 

\section{Experiments}\label{exp_sec}
In this section, we evaluate the performance of the proposed PQR model on two popular machine learning tasks: online rating prediction and online binary classification.

\subsection{Experimental Setup}
We compare the empirical performance of PQR with state-of-the-art variants of FM in the online learning setting. We construct the baselines by applying online learning approaches to the existing formulations of FM: vanilla FM \cite{RendleICDM}, CFM \cite{YamadaKDD}, and DiFacto \cite{LiDifacto}.  
We also compare with LR model (linear regression of online rating prediction and logistic regression for online binary classification) since it is still popular in the high-dimensional sparse datasets. We skip CFM for high-dimensional datasets due to its high computation cost. For optimization method, We apply the state-of-the-art FTRL-Proximal algorithm \cite{McMahanKDD} with $\ell_1+\ell_2$ regularization for LR and PQR. Specifically, the implementation of PQR-FTRL can be seen in Algorithm \ref{ftrl_pqr_alg}. For vanilla FM, we just applied the online gradient descent with $\ell_1$ regularization. To summarize, the compared models are:

The vanilla FM and DiFacto are non-convex models so that we need to randomly initialize the learning parameters to escape local minimum. And results are different for each trial run. Therefore, we repeat the same experiment for 20 times. Finally, the average and standard deviation of 20 scores (RMSE, AUC or LogLoss) are reported. On the other hand, LR, CFM and PQR are convex models. So we can initialize the learning parameters as zeros (like in Algorithm-\ref{ftrl_alg} and Algorithm-\ref{ftrl_pqr_alg}). As long as the input data (train/test) are the same, the output scores are the same and thus there is no need for repetition. Moreover, for PQR model, we also test different orders ($k$ values) for each dataset.

\begin{table}
  \caption{Statistics of datasets}
  \label{dataset_stats}
  \centering
  \begin{tabular}{llll}
    \toprule
    Datasets     & \#Feature & \#Instances    & Label \\
    \midrule
    ML100K & 100,000  & 100,000  & Numerical   \\
    ML1M & 100,000  & 1,000,209  & Numerical \\
    ML10M & 200,000 & 10,000,054 & Numerical \\
    \midrule
    Avazu & 1,000,000 & 40,528,967 & Binary \\
    Criteo & 1,000,000 & 51,882,752 & Binary \\
    DD2012 & 54,686,452 & 149,009,105 & Binary \\
    \bottomrule
  \end{tabular}
\end{table}

\paragraph{Datasets} For the online rating prediction task, we use the typical MovieLens datasets, including MovieLens-100K (ML100K), MovieLens-1M (ML1M) and MovieLens-10M (ML10M); For the online binary classification task, we select high-dimensional sparse datasets including Avazu, Criteo\footnote{\url{http://labs.criteo.com/2014/02/kaggle-display-advertising-challenge-dataset/}}, and KDD2012\footnote{\url{https://www.kaggle.com/c/kddcup2012-track1/data}}. These three high-dimensional datasets are preprocessed and can be downloaded from the LIBSVM website\footnote{\url{https://www.csie.ntu.edu.tw/~cjlin/libsvmtools/datasets/}}. The statistics of the datasets are summarized in Table \ref{dataset_stats}. All these datasets are randomly separated into train(80\%), validation(10\%) and test(10\%) sets. In our experiments, the training instances are fed one by one to the model sequentially.

\begin{table*}[]
  \caption{RMSE on MovieLens datasets}
  \label{rmse_mvl}
  \centering
  \begin{tabular}{llllllll}
    \toprule
    RMSE   & LR     & FM                  & CFM    & DiFacto             & PQR ($k$=500) & PQR ($k$=1000) & PQR ($k$=2000)\\
    \midrule
    ML100K & 1.0429 & 1.0316 $\pm$ 1.2e-3 & 1.0326 & 1.0312 $\pm$ 9.0e-4 & 1.0225    & 1.0215     & \textbf{1.0215} \\
    ML1M   & 1.0487 & 1.0434 $\pm$ 5.0e-4 & 1.0403 & 1.0411 $\pm$ 9.0e-4 & 1.0321    & 1.0250     & \textbf{1.0190} \\
    ML10M  & 0.9651 & 0.9605 $\pm$ 1.7e-3 & 0.9616 & 0.9572 $\pm$ 1.1e-3 & 0.9547    & 0.9541     & \textbf{0.9536} \\
    \bottomrule
  \end{tabular}
\end{table*}

\begin{table*}
  \caption{AUC, LogLoss, Training Time, and Model Size of high-dimensional datasets}
  \label{auc_sparse}
  \centering
  \begin{tabular}{lllllll}
    \toprule
    Avazu         & LR        & FM                  & DiFacto             & PQR ($k$=2000)    & PQR ($k$=4000)    & PQR ($k$=8000)    \\
    \midrule
    AUC           & 0.7562    & 0.7752 $\pm$ 1.8e-4 & 0.7743 $\pm$ 1.1e-4 & 0.7757        & 0.7777        & \textbf{0.7785}        \\
    LogLoss       & 0.3939    & 0.3840 $\pm$ 1.1e-4 & 0.3840 $\pm$ 1.2e-4 & 0.3830        & 0.3818        & \textbf{0.3812}        \\
    Training Time & 1$\times$ & 58.5$\times$      & 58.0$\times$          & 3.3$\times$  & 3.6$\times$  & 3.8$\times$  \\
    Model Size    & 16.37K    & 1.17M               &  0.29M              & 0.68M          &  1.32M        & 1.97M         \\
    \midrule
    Criteo        & LR        & FM                  & DiFacto             & PQR ($k$=200) & PQR ($k$=500)     & PQR ($k$=1000)  \\
    \midrule
    AUC           & 0.7151    & 0.7218 $\pm$ 1.6e-4 & 0.7216 $\pm$ 1.6e-4 & 0.7207        & 0.7220        & \textbf{0.7221}   \\
    LogLoss       & 0.5116    & 0.5068 $\pm$ 2.0e-4 & 0.5070 $\pm$ 8.0e-5 & 0.5076        & 0.5067        & \textbf{0.5066}   \\
    Training Time & 1$\times$ & 20.1$\times$        & 20.6$\times$       & 17.1$\times$ & 18.3$\times$ & 19.2$\times$  \\
    Model Size    & 1.09K     & 17K                 & 14.6K               & 18.30K        & 77.68K        & 98.61K  \\
    \midrule
    KDD2012       & LR        & FM                  & DiFacto             & PQR ($k$=20)      & PQR ($k$=50)      & PQR ($k$=100)     \\
    \midrule
    AUC           & 0.7944    & 0.7968 $\pm$ 1.9e-4 & 0.7955 $\pm$ 3.0e-4 & 0.8013        & 0.8015        & \textbf{0.8016}        \\
    LogLoss       & 0.1541    & 0.1535 $\pm$ 5.1e-5 & 0.1534 $\pm$ 6.5e-5 & 0.1524        & 0.1524        & \textbf{0.1523}        \\
    Training Time & 1$\times$ & 10.4$\times$        & 11.8$\times$        & 3.1$\times$  & 3.1$\times$ & 3.2$\times$   \\
    Model Size    & 9.11M     & 91.6M              & 72.2M                & 9.70M         & 9.71M         & 9.71M         \\
    \bottomrule
  \end{tabular}
\end{table*}

\paragraph{Evaluation Metrics} To evaluate the performances on both tasks properly, we select different metrics respectively: the Root Mean Square Error (RMSE) for the rating prediction tasks; and the AUC (Area Under Curve) and LogLoss for the binary classification tasks. For the later case, we also compare the computation time and model size (the number of nonzero parameters) to demonstrate the efficiency of compared algorithms.

\subsection{Online Rating Prediction}

We use Movie Lens datasets for this task. For ML100K and ML1M, we use gender, age, occupation, and zipcode as the user feature and the movie genre as the item feature. We perform the standard discretization, use one-hot representation for each feature, and then concatenate them together. For ML10M, since there is no demographic information of users, we just use the user ids as the user feature instead.

The rank of FM is selected from the set of $\{2, 4, 8, 16, 32, 64, 128, 256, 512\}$. The coefficient of regularization term and the learning rate are tuned in a range $[0.0001, 10]$. We list the RMSE of all compared algorithms in Table \ref{rmse_mvl}. From our observation, PQR achieves higher prediction accuracy than all the baselines, which illustrates the advantage of PQR.

\subsection{Online Binary Classification}

In many real applications, feature vectors are usually extremely high-dimensional hence sparse representation is used instead, i.e., only nonzero key-value pairs are stored. We demonstrate the performance of PQR as well as other approaches for this case using high-dimensional sparse datasets: Avazu, Criteo, and KDD2012. Due to the high-dimensionality, CFM is impractical. So we compare PQR with LR, vanilla FM and DiFacto in two aspects: accuracy (measured by AUC and LogLoss) and efficiency (measured by training time and model size).

The rank for vanilla FM and DiFacto is selected from the set of $\{4, 8, 16, 32, 64\}$. The coefficient of regularization term and the learning rate are selected in a range of $[0.0001, 10]$. We list the results in Table \ref{auc_sparse}. In most cases, PQR outperforms the baseline algorithms in both accuracy and efficiency. The accuracy demonstrates that PQR has better expressiveness than linear model and low-rank FM models. The efficiency illustrates the theoretical computation cost of PQR model. Finally, for FM, there is no theory about how to choose a proper rank. However, the order of PQR has a clear meaning which is good for parameter tuning.

\section{Conclusion and Future Work}\label{con_sec}

In this paper, we propose a projective quadratic regression (PQR) model under the online learning settings. It meets the requirements in online convex optimization framework and the global optimal solution can be achieved due to its convexity. In addition, we show that the computation cost of PQR can be low if we choose a suitable order. Finally, we demonstrate its effectiveness by comparing with state-of-the-art approaches. For future work, a more scientific approach to select the order for PQR is an attractive direction.

\bibliographystyle{aaai}
\bibliography{AAAI-MaW.2016.bib}

\end{document}